%%%%%%%%%%%%%%%%%%%%%%%%%%%%%%%%%%%%%%%%%%%%%%%%%%%%%%%%%%%%%%%%%%%%%%%%%%%%%%%%
%2345678901234567890123456789012345678901234567890123456789012345678901234567890
%        1         2         3         4         5         6         7         8

\documentclass[letterpaper, 10 pt, conference]{ieeeconf}  % Comment this line out if you need a4paper

\IEEEoverridecommandlockouts                              % This command is only needed if 
% you want to use the \thanks command

\overrideIEEEmargins                                      % Needed to meet printer requirements.

%In case you encounter the following error:
%Error 1010 The PDF file may be corrupt (unable to open PDF file) OR
%Error 1000 An error occurred while parsing a contents stream. Unable to analyze the PDF file.
%This is a known problem with pdfLaTeX conversion filter. The file cannot be opened with acrobat reader
%Please use one of the alternatives below to circumvent this error by uncommenting one or the other
%\pdfobjcompresslevel=0
%\pdfminorversion=4

% See the \addtolength command later in the file to balance the column lengths
% on the last page of the document

% The following packages can be found on http:\\www.ctan.org
%\usepackage{graphics} % for pdf, bitmapped graphics files
%\usepackage{epsfig} % for postscript graphics files
%\usepackage{mathptmx} % assumes new font selection scheme installed
%\usepackage{times} % assumes new font selection scheme installed
%\usepackage{amsmath} % assumes amsmath package installed
%\usepackage{amssymb}  % assumes amsmath package installed
\usepackage{amsmath} % assumes amsmath package installed
\usepackage{amssymb}  % assumes amsmath package installed
\usepackage[dvipsnames]{xcolor}

%% Added Packages===============================
% \usepackage{natbib}
% \usepackage{subfig}
\bibliographystyle{IEEEtran}
% \makeatletter, \let\c@lofdepth\relax, \let\c@lotdepth\relax, \makeatother
\usepackage{mathtools}% http://ctan.org/pkg/mathtools

%% Added Packages===============================Guangyi
\usepackage{booktabs}% for tables
\usepackage{caption}% Better control over caption
\captionsetup{font=small}
\usepackage{subcaption}

\usepackage{tikz}
\usetikzlibrary{shapes,arrows}
\tikzstyle{block} = [draw, fill=white!20, rectangle, 
minimum height=3em, minimum width=6em]
\tikzstyle{sum} = [draw, fill=white!20, circle, node distance=1cm]
\tikzstyle{input} = [coordinate]
\tikzstyle{output} = [coordinate]
\tikzstyle{pinstyle} = [pin edge={to-,thin,black}]

\newcommand{\SP}{\hspace{2pt}}
\newcommand{\E}{\mathbb{E}}
\newcommand{\R}{\mathbb{R}}

\newcommand{\bx}{{\bf x}}
\newcommand{\by}{{\bf y}}
\newcommand{\bu}{{\bf u}}
\newcommand{\bw}{{\bf w}}

\newcommand{\bR}{{\bf R}}
\newcommand{\bP}{{\bf  P}}

\newcommand{\A}{{\bf A}}
\newcommand{\B}{{\bf B}}
\newcommand{\C}{{\bf C}}
\newcommand{\F}{{\bf F}}
\newcommand{\G}{{\bf G}}
% \newcommand{\norm}[1]{\left\lVert#1\right\rVert}

% Note
% Check
% Needs rewrite
% take a look

\newcommand{\norm}[1]{\left\lVert#1\right\rVert}

\newcommand\vertarrowbox[3][2ex]{%
	\begin{array}[t]{@{}c@{}} #2 \\
		\left\uparrow\vcenter{\hrule height #1}\right.\kern-\nulldelimiterspace\\
		\makebox[0pt]{\scriptsize#3}
	\end{array}%
}
\usepackage[ruled,vlined]{algorithm2e}
\newtheorem{thm}{Theorem}
\newtheorem{rem}{Remark}

\newtheorem{defn}{Definition}
\newtheorem{cor}{Corollary}
\newtheorem{asm}{Assumption}
\newtheorem{exmp}{Example}

\title{\LARGE \bf  Robust Learning of  Recurrent Neural Networks in Presence of Exogenous Noise}

\author{Arash Amini$^{1}$, Guangyi Liu$^{1}$ and Nader Motee$^{1}$% <-this % stops a space
% 	\thanks{\CR *This work was not supported by any organization}% <-this % stops a space
	\thanks{$^{1}$A.Amini, G.Liu, and N.Motee are with the Department of Mechanical Engineering and Mechanics, Lehigh University, Bethlehem, PA 18015, USA.
		{\tt\small (a.amini,gliu,motee)@lehigh.edu}}%
% 	\thanks{$^{2}$Q.Sun is with the Department of Mathematics, University of Centeral Florida,Orlando, FL 32816,USA
% 		{\tt\small qiuy.sun@ucf.edu}}%
}

\begin{document}

	\maketitle
	\thispagestyle{empty}
	\pagestyle{empty}

	%%%%%%%%%%%%%%%%%%%%%%%%%%%%%%%%%%%%%%%%%%%%%%%%%%%%%%%%%%%%%%%%%%%%%%%%%%%%%%%%
	\begin{abstract} 
% 	 The artificial neural network's sequential architecture, combined with its directed graph structure, provides a solid foundation to solve a wide range of problems, from predicting time series to solving classification problems.
	
	Recurrent Neural networks (RNN) have shown promising potential for learning dynamics of sequential data. However, artificial neural networks are known to exhibit poor robustness in presence of input noise, where the sequential architecture of RNNs exacerbates the problem. In this paper, we will use ideas from control and estimation theories to propose a tractable robustness analysis for RNN models that are subject to input noise.
    The variance of the output of the noisy system is adopted as a robustness measure to quantify the impact of noise on learning. It is shown that the robustness measure can be estimated efficiently using linearization techniques. Using these results, we proposed a learning method to enhance robustness of a RNN with respect to exogenous Gaussian noise with known statistics. Our extensive simulations on benchmark problems reveal that our proposed methodology significantly improves robustness of recurrent neural networks.   
	\end{abstract}

	%%%%%%%%%%%%%%%%%%%%%%%%%%%%%%%%%%%%%%%%%%%%%%%%%%%%%%%%%%%%%%%%%%%%%%%%%%%%%%%%
	\section{INTRODUCTION}
	Recurrent models have shown promising performance in  various applications over the past decades, due to their special design to handle sequential data, including image recognition \cite{vinyals2015show}, linguistics \cite{hannun2014deep}, and robotics \cite{liu2020reinforcement},\cite{harvey1994seeing}. In general, Recurrent neural networks (RNN) can be considered as a complex nonlinear dynamical system \cite{goodfellow2016deep},\cite{miller2018stable}, where their properties can be investigated using the existing tools developed for control and dynamical systems  \cite{pascanu2013difficulty}. The key difference between the two is that with RNNs  we are approximating the trainable parameters using a given set of sequential inputs to recover (learn) their underlying dynamics. In contrast, in control and dynamical systems, a model is already given and the objective is to control the dynamic behavior of the system using a set of sequential inputs \cite{anderson2007optimal}. The challenge of training RNN models for achieving an acceptable accuracy has been studied extensively  \cite{pascanu2013difficulty},\cite{hochreiter2001gradient}. There are different architectures proposed to improve the learning accuracy of RNNs, such as Long Short-term Memory (LSTM) \cite{hochreiter1997long} and Gated recurrent unit (GRU) \cite{cho2014learning}.

	The structure of a recurrent neural network is illustrated in Figure \ref{fig:RNN_unroll} which was first introduced in 1980's \cite{rumelhart1986learning,elman1990finding}.  The main objective for this class of neural networks is to learn time series and explore the correlation between different states throughout time. Unrolling a recurrent model, as shown in Figure \ref{fig:RNN_unroll}, yields a nonlinear dynamical system. As a result, techniques intended for dynamical systems and control are being used more frequently to analyze this branch of artificial neural networks. The authors of \cite{hardt2016gradient} show that  stable dynamical system is learnable with gradient descents methods. In  \cite{miller2018stable}, a notion of stability \cite{jin1994absolute} is used to answer questions regarding stable training of recurrent models and whether one can approximate a recurrent model with a deep neural network. The stability of RNN models is further investigated in \cite{manek2020learning} by applying the notion of stability in the sense of Lyapunov. 
	
    \begin{figure}[t]
    \centering
    \includegraphics[width=0.45\textwidth,trim=10 50 0 50, clip]{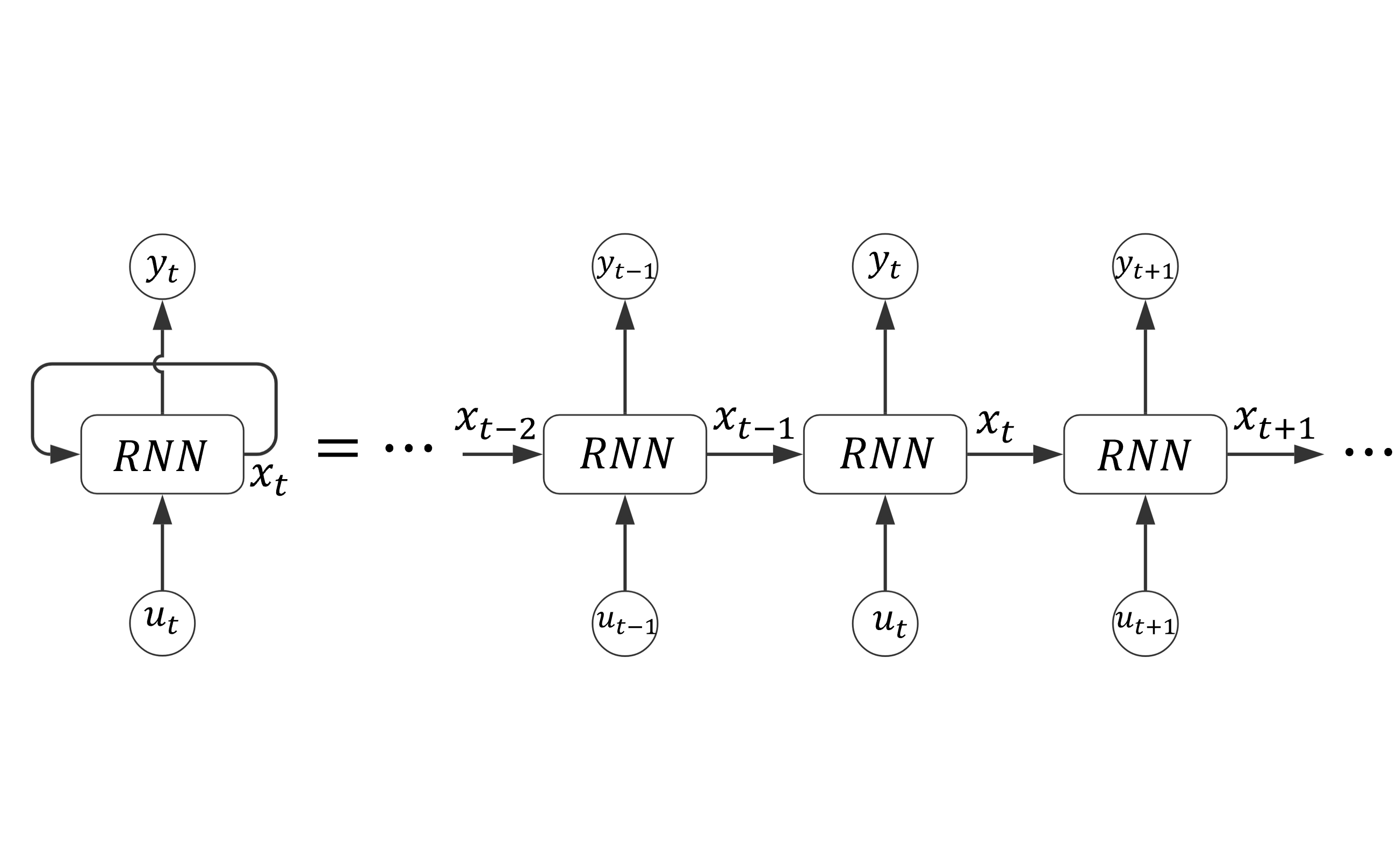}
    \caption{{Schematic diagram of a} recurrent neural network }
    \label{fig:RNN_unroll}
    \end{figure}

	Despite recent advancements in the area of artificial neural networks, most of the proposed architectures are fragile in presence of input noise. Due to the iterative nature of the recurrent models, the effect of input noise propagates and magnifies as the input sequence becomes longer. The noise from prior inputs that are passing through nonlinear functions make the analysis more challenging. Moreover, if the model is unstable, there is no guarantee that the model can reach a reasonable outcome under presence if exogenous noise.
    Some notions of robustness in artificial neural networks have been addressed recently \cite{weng2018evaluating} and in more detail for recurrent models under advisory attack by \cite{ko2019popqorn},\cite{papernot2016crafting}. 
	
    In classic control and estimation problems, it is assumed that for a given unknown model, the input can be controlled \cite{anderson2007optimal,zhou1996robust}. Therefore, one can stimulate the input signal and collect measurements from the output to obtain a reasonable estimation for the system \cite{kalman1960contributions},\cite{julier2004unscented}. During the supervised learning of a RNN model, however, our ability to control the input sequence is limited. A  series of data with their corresponding target outputs are usually provided beforehand. Even though solving the learning problem gives a reasonable estimation of the model parameters in terms of prediction accuracy, the question of how well the model will performs in presence of anonymous data, which is corrupted by noise, is yet to be explored. 
    
	Throughout this paper, we aim to elucidate the problem of robustness for general recurrent neural networks with additive input noise. We begin by defining an effective metric for the RNN model's robustness that can quantify the impact of input noise on the output and cost function. Next, by calculating  an upper bound for the robustness measure, we show that under what condition the robustness measure is bounded and explore venues related to training  stable RNN models and its effect on the robustness. We proceed to introduce a novel approach that provides an estimation for the output statistics inspired by ideas from the Extended Kalman Filter. Finally, we propose two different algorithms to enhance robustness of recurrent neural networks. Our extensive simulation results confirm that our proposed methodology significantly improves robustness of RNNs in presence of Gaussian input noise.

    \section{Problem Statement}
     The state-of-the-art machine learning technique to handle sequential data is the class of recurrent neural networks (RNN), which has been successfully employed for commercial applications like Google's voice \cite{hughes2013recurrent}. Circulating the incoming information in a loop provides us a model that keeps the memory of its past and current inputs. The dynamics of a general recurrent neural network can be modeled by   
	\begin{align} \label{RNN_dyn}
        \begin{split}
            \bx_t &= \F(\bx_{t-1},\bu_t,{\theta}_F),\\
            \by_t &= \G(\bx_{t},\theta_G)
        \end{split}
    \end{align}
	 with initial condition $\bx_0$, where $\bu_t \in \R^d,\bx_t \in {\R^{n}},\by_t \in {\R^m}$ represent the input, hidden state, and the output of the recurrent network at time instant $t$. The vectors of all trainable parameters are denoted by $\theta_F$ and $\theta_G$. Examples  include basic RNN\cite{pascanu2013difficulty}, LSTM \cite{hochreiter1997long}, and GRU \cite{cho2014learning}. Throughout the paper, $\norm{\bx}$  stands for the Euclidean norm of vector $\bx$.
	
	\vspace{0.1cm}
    \begin{asm}
    The vector-valued function $\F$ is Lipschitz continuous w.r.t  the hidden state and input with  constants  $\lambda >0$ and $\kappa_u >0$, respectively. Moreover function $\G$ is  Lipschitz continuous w.r.t the hidden state with constant $\kappa_G >0$. 
      \end{asm}
    \vspace{0.1cm}
    
    This assumption implies that 
    \begin{equation}\label{Lip-F}
        \norm{\,\F(\bx,\bu,\theta_F)-\F(\Bar{\bx},\bu,\theta_F)\,} \leq \lambda \norm{\,\bx-\Bar{\bx}\,}
    \end{equation}
    for all $\bx, \Bar{\bx} \in \R^n$ and  
    \begin{equation}
        \norm{\,\F(\bx,\bu,\theta_F)-\F(\bx,\Bar{\bu},\theta_F)\,} \leq \kappa_u \norm{\,\bu-\Bar{\bu}\,},
    \end{equation}
    for all $\bu,\Bar{\bu} \in \R^d $. 
    %%%%%%%%%%%%%%%%%%%%%%%%%%%%%%%%%%%%%%%%%%%%%%%%%%%%%%%%%%%% Arash 
  	The learning process in a RNN model involves finding \linebreak[4] parameters $\theta_F,\,\theta_G$ using given  training datasets $\left\{(\bu_{t},\by^*_{t})  \right\}_{t=0}^{T}$ by minimizing a loss function that measures closeness between the target output  $\left\{\by^*_{t}\right\}_{t=0}^{T}$ and the output of system \eqref{RNN_dyn} w.r.t input $\left\{\bu_{t}\right\}_{t=0}^{T}$. In this work, we consider the class of loss functions that can be expressed as    
    \begin{equation}\label{loss-fcn}
    \mathcal{E}(\theta_F,\theta_G)=\sum_{t=1}^T \mathcal{L}({\by}_{t},{\by}^*_{t}),
    \end{equation}
    where $\mathcal{L}$ is Lipschitz continuous with constant  $\kappa_{\mathcal{L}} > 0$.
    Examples of some commonly used loss functions are the cross-entropy function for classification purposes, which is defined by
    \[      \mathcal{L}(\,\by_t,\by_t^*\,) = -\by_t[{\bf i}^*]+\log \left(\sum_{j=1}^{m} \exp\left({\by_t\,[j]\,}\right) \right),\]
where   
   ${\bf i}^* = \arg\max_{j}~ \by_t^*\,[j]$ is a class label, and mean square error function to predict time series, which is given by   
    \[\mathcal{L} (\by_t,\by_t^*) = \norm{\by_t-\by_t^*}_2^2.\]

    The {\it problem} is to learn trainable parameters of the recurrent neural network \eqref{RNN_dyn} such that system \eqref{RNN_dyn} exhibits a robust behaviour in presence of additive input noise. Suppose that    
    the given training data  $\bu_t$ is corrupted by some additive Gaussian noise $\bw_t \sim \mathcal{N}(0,\Sigma_t)$, i.e., $\Tilde{\bu}_t=\bu_t + \bw_t$, and  fed to the RNN. The hidden state and output become random variables whose time evolution are given by  
    \begin{align}\label{RNN-Noise}
        \begin{split}
            \Tilde{\bx}_t &= \F(\Tilde{\bx}_{t-1},\Tilde{\bu}_t,\theta_F),\\
            \Tilde{\by}_t &= \G(\Tilde{\bx}_{t},\theta_G),
        \end{split}
    \end{align}
  The {\it objective} of this paper is to train network \eqref{RNN_dyn} using the given datasets $\left\{(\bu_{t},\by^*_{t})  \right\}_{t=0}^{T}$ such that the output of the perturbed system \eqref{RNN-Noise} minimizes the expected loss function 
  \begin{equation}\label{expected-loss}
  \tilde{\mathcal{E}}(\theta_F,\theta_G) = \mathbb{E}\left\{\sum_{t=1}^T\mathcal{L}(\Tilde{\by}_{t}, \by_t^*)\right\}.
  \end{equation}
  
%   Our aim is to train a recurrent model that is robust with respect to input noise i.e. after learning phase the model perform exceptionally accurate under disturbance of input. To put it differently we want to train the recurrent model by utilizing the original dataset such that the disturbed cost 

%be minimized for additive noise $\bw_t \sim \mathcal{N}(0,\Sigma_t)$. 
%To attain such properties for the models, we need to devise a new metric for recurrent model robustness that can distinguish the disturbance's effect on the model's precision.
%We also intend to improve the learning and robustness of RNN models by employing concepts from the control and estimations field.
    
    \begin{rem}
    Since the sequence of target outputs  $\left\{\by_t^*\right\}_{t=0}^T$ are given, and for simplicity of our notations, we may use notation $\mathcal{L}(\by_t)$ instead of $\mathcal{L}(\by_t,\by_t^*)$.
    \end{rem}
    % {\bf Do you want to keep this sentence: Yes}

    \section{Robustness Measure for Learning}
We can interpret the disturbed RNN model \eqref{RNN-Noise} as a control system with noisy input and employ ideas from robust control \cite{zhou1996robust} to analyze this class of neural networks. We adopt the  expected deviation of the output of the noisy RNN \eqref{RNN-Noise} from the output of the undisturbed RNN \eqref{RNN_dyn} as a robustness measure, i.e.,   
%    The first step in investigating the impact of disturbance of input data on RNN's learning is to identify a suitable measure for assessing the system's robustness in terms of noise statistics.
%   The learning performance is being measured by the cost function defined to evaluate the distance of output and designated target. The Lipschitz continuity of the cost function provide a foundation for analysing the outcome of RNN driven by external stochastic disturbances. 
    \begin{equation} \label{rbt-meas}
        \rho_t(\theta_F,\theta_G) := \E \left\{ \norm{\Tilde{\by}_t-\by_t}^2 \right\}.
        % \rho_t = \textbf{Tr}\big(\bR_{t}\big)+\textbf{Bias}(\by_{t}).
    \end{equation}
%    where $\Tilde{\by}_{t}$ represent the output of the model under exogenous noise.

By denoting the expected value of the  output by $\hat{\by}_t =\E [\Tilde{\by}_t]$  and its covariance  by
    \begin{align*}
        \bR_t=\E \left\{(\Tilde{\by}_t - \hat{\by}_t)(\Tilde{\by}_t - \hat{\by}_t)^T \right\},
    \end{align*} 
one can show that 
    \begin{equation}\label{trace-bias}
         \rho_t(\theta_F,\theta_G) = \mathrm{Tr}\left(\bR_{t}\right)+\mathrm{Bias}\left(\by_{t}\right),
    \end{equation}
 where the bias term is given  by
    \begin{equation*}
        \mathrm{Bias}(\by_{t}) = (\hat{\by}_{t}- {\by}_{t})^T(\hat{\by}_{t}- {\by}_{t}).
    \end{equation*}
   
Our objective is to learn parameters $\theta_F,\theta_G$ using the original dataset such that the RNN  will have a robust performance with respect to all noisy inputs $\left\{\bu_t + \bw_t\right\}_{t=0}^T$ with   $\bw_t \sim \mathcal{N}(0,\Sigma_t)$. Calculating the expected loss function \eqref{expected-loss} is very challenging in general as one should calculate statistics of the output of the noisy nonlinear system \eqref{RNN-Noise}, which requires solving the corresponding Fokker–Planck equation \cite{pavliotis2014stochastic},\cite{risken1996fokker}. Our following result quantifies a relationship that will help us solve the robust learning problem efficiently.

\begin{thm}\label{Thm-Upper-Bound}
The expected loss function for the noisy recurrent neural network \eqref{RNN-Noise} satisfies 
   \begin{align}\label{Var_Acc}
        \tilde{\mathcal{E}}(\theta_F,\theta_G)  \leq
         \kappa_{\mathcal{L}} \sum_{t=1}^T \sqrt{\rho_t(\theta_F,\theta_G)} + \mathcal{E}(\theta_F,\theta_G),
    \end{align}
    where $\kappa_{\mathcal{L}}$ is the Lipschitz constant of $\mathcal{L}$. 
\end{thm}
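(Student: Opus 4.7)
The plan is a fairly short triangle-inequality style argument combined with Jensen's inequality. The key identity is to split the perturbed loss at each time step as
\begin{equation*}
\mathcal{L}(\tilde{\by}_t,\by_t^*) \;=\; \bigl[\mathcal{L}(\tilde{\by}_t,\by_t^*) - \mathcal{L}(\by_t,\by_t^*)\bigr] \;+\; \mathcal{L}(\by_t,\by_t^*),
\end{equation*}
so that the problem reduces to controlling the expectation of the bracketed perturbation term.

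First I would invoke the Lipschitz continuity of $\mathcal{L}$ in its first argument (constant $\kappa_{\mathcal{L}}$) to bound the bracketed term pointwise by $\kappa_{\mathcal{L}}\,\|\tilde{\by}_t - \by_t\|$. Taking expectations, and noting that $\by_t$ and $\by_t^*$ are deterministic while $\tilde{\by}_t$ is random, this yields
\begin{equation*}
\E\bigl\{\mathcal{L}(\tilde{\by}_t,\by_t^*)\bigr\} \;\leq\; \kappa_{\mathcal{L}}\,\E\bigl\{\|\tilde{\by}_t - \by_t\|\bigr\} \;+\; \mathcal{L}(\by_t,\by_t^*).
\end{equation*}

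Next, the main trick is to relate $\E\{\|\tilde{\by}_t - \by_t\|\}$ to $\rho_t = \E\{\|\tilde{\by}_t - \by_t\|^2\}$. Since $\sqrt{\cdot}$ is concave, Jensen's inequality gives
\begin{equation*}
\E\bigl\{\|\tilde{\by}_t - \by_t\|\bigr\} \;=\; \E\bigl\{\sqrt{\|\tilde{\by}_t - \by_t\|^2}\bigr\} \;\leq\; \sqrt{\E\{\|\tilde{\by}_t - \by_t\|^2\}} \;=\; \sqrt{\rho_t(\theta_F,\theta_G)}.
\end{equation*}
Substituting back produces a per-time-step bound $\E\{\mathcal{L}(\tilde{\by}_t,\by_t^*)\} \leq \kappa_{\mathcal{L}}\sqrt{\rho_t(\theta_F,\theta_G)} + \mathcal{L}(\by_t,\by_t^*)$.

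Finally, summing over $t = 1,\dots,T$ and using linearity of expectation along with the definitions of $\tilde{\mathcal{E}}$ and $\mathcal{E}$ in \eqref{expected-loss} and \eqref{loss-fcn} yields the claimed inequality \eqref{Var_Acc}. There is no real obstacle here; the only subtlety is remembering that the clean trajectory $\by_t$ is deterministic (so it pulls out of the expectation) and that Jensen is applied in the correct direction for the concave square root. Everything else is bookkeeping.
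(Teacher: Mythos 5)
Your proposal is correct and follows essentially the same route as the paper's own proof: the same add-and-subtract decomposition of $\mathcal{L}(\tilde{\by}_t)$, the same Lipschitz bound, the same application of Jensen's inequality to the concave square root to pass from $\E\{\|\tilde{\by}_t-\by_t\|\}$ to $\sqrt{\rho_t(\theta_F,\theta_G)}$, and the same final summation over $t$. No gaps.
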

\begin{proof}
At any time instance $t$ one can write 
    \begin{align*}
        \mathcal{L}(\Tilde{\by}_{t})& =\mathcal{L}(\Tilde{\by}_{t})-\mathcal{L}({\by}_{t})+\mathcal{L}({\by}_{t}), \\
        & \leq \norm{\mathcal{L}(\Tilde{\by}_{t})-\mathcal{L}({\by}_{t,})} +\mathcal{L}({\by}_{t}),\\
        &\leq \kappa_{\mathcal{L}}\norm{\Tilde{\by}_{t} - {\by}_{t}}  +\mathcal{L}({\by}_{t}),
    \end{align*}
    taking expected value with respect to the input noise from both side implies that
    
    \begin{equation} \label{BV-ineq}
        \E\left\{\mathcal{L}(\Tilde{\by}_{t})\right\} \leq \mathcal{L}({\by}_{t}) +
         \kappa_{\mathcal{L}}\E\left\{\norm{\Tilde{\by}_{t}- {\by}_{t}}\right\}.
    \end{equation}
    
    For the second term by the Jensen's inequality and the fact that square root is a concave function 
    \begin{align*}
        \E\Big\{\norm{\Tilde{\by}_{t}- {\by}_{t}}\Big\} &= \E\left\{ \sqrt{(\Tilde{\by}_{t}- {\by}_{t})^T(\Tilde{\by}_{t}- {\by}_{t})}\right\}\\
        & \leq \sqrt{ \E \left\{ \norm{\Tilde{\by}_t-\by_t}^2 \right\}}\\
        & = \sqrt{ \rho_t(\theta_F,\theta_G)}.
    \end{align*}
    The above inequality combined with inequality \eqref{BV-ineq} implies that
    \begin{equation*}
        \E\left\{\mathcal{L}(\Tilde{\by}_{t})\right\} \leq \mathcal{L}({\by}_{t}) +
         \kappa_{\mathcal{L}}\sqrt{ \rho_t(\theta_F,\theta_G)}.
    \end{equation*}
    Taking the summation over time from both sides of the inequality result in 
    \begin{equation*}
        % \tilde{\mathcal{E}}(\theta_F,\theta_G)  \leq
        %  \kappa_{\mathcal{L}} \sum_{t=1}^T \sqrt{\rho_t(\theta_F,\theta_G)} + \mathcal{E}(\theta_F,\theta_G),
        \sum_{t=1}^T \E\left\{\mathcal{L}(\Tilde{\by}_{t})\right\} \leq  \sum_{t=1}^T \mathcal{L}({\by}_{t}) +
         \sum_{t=1}^T \kappa_{\mathcal{L}}\sqrt{ \rho_t(\theta_F,\theta_G)}.
    \end{equation*}
    The expected value and summation are exchangeable. Therefore, from the definition of \eqref{expected-loss}, the inequality \eqref{Var_Acc} follows.
\end{proof}

\vspace{0.1cm}

%    Based on the discussion in the previous section our objective  is to find appropriate values for trainable parameters $\theta_F,\theta_G$ by using the original dataset, such that the learned model not only performs well with original data but also with disrupted input.
%    However due to the stochastic nature of $\Tilde{\by}_t$ it is impossible to exactly calculate the cost function. 
    
    The right-hand-side of the inequality \eqref{BV-ineq} provides a meaningful   \textit{decomposition} that involves the learning accuracy, which can be quantified by the loss function \eqref{loss-fcn}, and the impact of noise on learning, which is quantified by the square root of the robustness measure. This result is important as it suggests that in order to achieve robust learning with respect to exogenous noise, one should train the RNN by minimizing a regularized cost function that represents a term for learning accuracy and an additional term for robustness. We will discuss this in detail in  Section \ref{sec:Learning Robust RNN}.
    %The first element represents a metric for learning accuracy of original dataset. The latter part provide an upper bound approximation for the effect of the noise on the loss of disturbed model which is based on the statistics of the $\Tilde{\by}_t$. 

 %   The  inequality \eqref{Var_Acc} provide a computable decomposition for effect of noise on learning. By minimizing the right hand-side of this inequality we can mange to reduce the cost function for the disturbed model. Therefore we are going to define the robustness measure by equation \eqref{rbt-meas}. 
    
\begin{rem}
The robustness measure \eqref{rbt-meas} has been already used in evaluating the $\mathcal{H}_2$-norm of linear time-invariant systems that are subject to Gaussian noise \cite{doyle1988state,bamieh2003exact, siami2017new,siami2017growing,siami2017centrality,mousavi2020explicit,mousavi2020koopman} 
\end{rem}

\section{Calculating Upper Bounds for the Robustness Measure}
As we discussed earlier, finding explicit forms for the robustness measure \eqref{rbt-meas} is a tedious task as one needs to calculate statistics of a stochastic process generated by a nonlinear dynamical system. In the following, we aim at formulating some explicit upper bounds under some assumptions.     

    % Defining an appropriate measurement for the robustness provide a solid ground to further analyse the effect of input noise on the learning accuracy.
    %The nonlinear sequential architecture of the RNN models amplify input noise at each iteration. Hence , there is no guarantee that adding noise would make the model's performance unbounded.
    %Finding an upper bound for the robustness measure  ensures that the output remains bounded and allows us to approximate the robustness measure to some extent.

% In this section we to borrow some definitions from \cite{miller2018stable},\cite{pascanu2013difficulty} regarding the concept of  stable recurrent models, to obtain an upper bound that guarantees the convergence of outputs when noise is applied. 

\vspace{0.1cm}
 \begin{defn} 
The RNN model \eqref{RNN_dyn} is stable if there exists $\theta_F$ such that the Lipschitz property \eqref{Lip-F} holds with $\lambda < 1$. 
%for some  . In other words the RNN model \eqref{RNN_dyn} is stable if the function $\F$ is $\lambda\text{-counteractive}$ for hidden state $\bx$. 
\end{defn}
\vspace{0.1cm}

We refer to \cite{miller2018stable} for a complete discussion on this class of networks. 
% \begin{rem}
%    While analysing a given model we assume that learnable parameters $\theta_F$, $\theta_G$ ara fixed,  therefore for ease  of notations, we may use notation $\rho_t$ instead of $\rho_t(\theta_F,\theta_G*)$.
%    \end{rem}
Stable recurrent neural networks are more reliable due to their predictable behavior. In the following theorem, we show that if an RNN  is stable, then the robustness measure is bounded for every data sequence.
% Nevertheless, due to the gradient vanishing problem, training a stable model may not be as efficient as regular models.

\begin{thm}\label{thm2}
Let us consider the noiseless RNN \eqref{RNN_dyn} with initial condition $\bx_0$, and the noisy RNN \eqref{RNN-Noise} with additive Gaussian noise $\bw_t \sim \mathcal{N}(0,\Sigma_t)$ and initial condition that is drawn from $\mathcal{N}(\bx_0, \Gamma)$. Then, 
\begin{align}\label{UP1}
    \rho_t(\theta_F,\theta_G)  \leq \kappa_G^2\left((2\lambda^2)^t\mathrm{Tr}(\Gamma)  + \kappa_u \sum_{i=0}^{t-1} (2\lambda^2)^{i}   \mathrm{Tr} (\Sigma_i)\right).
\end{align}
Moreover, if $\lambda < \frac{1}{\sqrt{2}}$ and the covariance matrix of the input noise stays  constant over time, i.e.,  $\Sigma_t=\Sigma$ for all $t \geq 0$, then   
\begin{align}\label{UP2}
     \rho_{\infty}(\theta_F,\theta_G)  \leq \frac{2(\kappa_u \kappa_G)^2 \mathrm{Tr} (\Sigma)}{1-2\lambda^2}.
\end{align}
\end{thm}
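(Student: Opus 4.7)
The plan is to bound $\rho_t$ by first transferring the output-norm deviation onto a deviation between the hidden states (using Lipschitzness of $\G$), then to set up a recursive inequality for the expected squared deviation of the hidden states, and finally to iterate the recursion and sum the resulting geometric series.

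\textbf{Step 1: Reduction to hidden-state deviation.} Since $\G$ is Lipschitz in the hidden state with constant $\kappa_G$, I would write
\[
\|\tilde{\by}_t - \by_t\|^2 = \|\G(\tilde{\bx}_t,\theta_G) - \G(\bx_t,\theta_G)\|^2 \leq \kappa_G^2 \|\tilde{\bx}_t - \bx_t\|^2,
\]
so that $\rho_t(\theta_F,\theta_G) \leq \kappa_G^2 \, \E\{\|\tilde{\bx}_t - \bx_t\|^2\}$. It therefore suffices to bound $e_t := \E\{\|\tilde{\bx}_t - \bx_t\|^2\}$.

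\textbf{Step 2: Recursive bound on $e_t$.} Using the RNN dynamics in \eqref{RNN_dyn} and \eqref{RNN-Noise}, I would insert and subtract the mixed term $\F(\bx_{t-1},\tilde{\bu}_t,\theta_F)$ and apply the two Lipschitz bounds from Assumption 1, obtaining
\[
\|\tilde{\bx}_t - \bx_t\| \leq \lambda \|\tilde{\bx}_{t-1} - \bx_{t-1}\| + \kappa_u \|\bw_t\|.
\]
Squaring both sides and applying the elementary inequality $(a+b)^2 \leq 2a^2 + 2b^2$ then taking the expectation (and using $\E\{\|\bw_t\|^2\} = \mathrm{Tr}(\Sigma_t)$) yields the recursion
\[
e_t \leq 2\lambda^2 \, e_{t-1} + 2\kappa_u^2 \, \mathrm{Tr}(\Sigma_t),
\]
with initial value $e_0 = \mathrm{Tr}(\Gamma)$ since $\tilde{\bx}_0 - \bx_0 \sim \mathcal{N}(0,\Gamma)$.

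\textbf{Step 3: Unrolling the recursion.} Iterating this one-step recursion $t$ times gives
\[
e_t \leq (2\lambda^2)^t \, \mathrm{Tr}(\Gamma) + 2\kappa_u^2 \sum_{i=0}^{t-1} (2\lambda^2)^{i} \, \mathrm{Tr}(\Sigma_{t-i}),
\]
which, after multiplying by $\kappa_G^2$, yields the bound \eqref{UP1} (the constant $\kappa_u$ absorbing the factor of $2$ as written in the statement).

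\textbf{Step 4: Steady-state bound.} For the second claim, I assume $\Sigma_t \equiv \Sigma$ and $2\lambda^2 < 1$. The first term $(2\lambda^2)^t \mathrm{Tr}(\Gamma)$ vanishes as $t \to \infty$, and the remaining series is a geometric sum with ratio $2\lambda^2 < 1$, so
\[
\sum_{i=0}^{\infty} (2\lambda^2)^{i} \, \mathrm{Tr}(\Sigma) = \frac{\mathrm{Tr}(\Sigma)}{1-2\lambda^2},
\]
from which \eqref{UP2} follows after multiplying by $2\kappa_u^2 \kappa_G^2$.

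The calculation is essentially routine once the right inequalities are selected. The only mildly subtle step is the choice to use $(a+b)^2 \leq 2a^2 + 2b^2$ (rather than, say, Young's inequality with a tunable parameter); this is what forces the contractivity threshold $\lambda < 1/\sqrt{2}$ for the steady-state result and explains the factor $2$ appearing throughout the theorem. The mixed-argument Lipschitz decomposition in Step 2 is the other place requiring care, because one must add and subtract $\F(\bx_{t-1},\tilde{\bu}_t,\theta_F)$ (not $\F(\tilde{\bx}_{t-1},\bu_t,\theta_F)$) so that each of the two Lipschitz constants applies to the correct argument.
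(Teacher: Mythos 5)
Your proof is correct and follows essentially the same route as the paper: Lipschitz reduction to the hidden-state deviation, insertion of the mixed term $\F(\bx_{t-1},\tilde{\bu}_t,\theta_F)$, the $(a+b)^2\le 2a^2+2b^2$ step, and unrolling the resulting geometric recursion. Your version is in fact slightly more careful than the paper's: your noise index $\Sigma_{t-i}$ is the correct one from unrolling the recursion, and your coefficient $2\kappa_u^2$ on the sum is what the derivation actually yields (the $\kappa_u$ appearing in \eqref{UP1} is a typo in the statement, as you implicitly noted).
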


\vspace{0.2cm}
\begin{proof}
At each time instant $t$, we have
    \begin{align*}
    \norm{\bx_t-\Tilde{\bx}_t} &= \norm{\F(\bx_{t-1},\bu_t-\F(\Tilde{\bx}_{t-1},\Tilde{\bu}_t)} \\
    &=||\F(\bx_{t-1},\bu_t)-\F(\bx_{t-1},\Tilde{\bu}_t)\\
    &\hspace{2cm} + \F(\bx_{t-1},\Tilde{\bu}_t) -\F(\Tilde{\bx}_{t-1},\Tilde{\bu}_t)||\\
    &\leq \norm{\F(\bx_{t-1},\bu_t)-\F(\bx_{t-1},\Tilde{\bu}_t)} \\ 
    &\hspace{2cm} + \norm{\F(\bx_{t-1},\Tilde{\bu}_t) -\F(\Tilde{\bx}_{t-1},\Tilde{\bu}_t)}\\
    &\leq \kappa_u \norm{\Tilde{\bu}_t-\bu_t} + \lambda \norm{\Tilde{\bx}_t-\bx_t} \\
    &\leq \kappa_u \norm{\bw_t} + \lambda \norm{\Tilde{\bx}_{t-1}-\bx_{t-1}}.
\end{align*}
Squaring both sides results in
\begin{align}
    \norm{\Tilde{\bx}_{t}-\bx_{t}}^2 &\leq \big(\kappa_u \norm{\bw_t} + \lambda \norm{\Tilde{\bx}_{t-1}-\bx_{t-1}} \big)^2 \nonumber\\ \label{ineq-hidden}
    & \leq 2 \big( \kappa_u^2 \norm{\bw_t}^2  + \lambda^2\norm{\Tilde{\bx}_{t-1}-\bx_{t-1}}^2\big),
\end{align}
where inequality  $(x+y)^2\leq 2(x^2+y^2)$,  for all $x,y \geq 0$, is utilized in the latter step. 
The Lipschitz continuity of $\G$ implies that
\begin{align*}
    \norm{\Tilde{\by}_{t}-\by_{t}} =\norm{\G(\bx_t)-\G(\Tilde{\bx_t}))} \leq \kappa_G\norm{\Tilde{\bx}_{t}-\bx_{t}}.
    %  &\leq L_G(\norm{\Tilde{\bx}_{t-1}-\bx_{t-1}}),
\end{align*}
The robustness measure can be bounded by
\begin{equation}
    \rho_t(\theta_F,\theta_G)  = \E \left\{\norm{\Tilde{\by}_{t}-\by_{t}}^2\right\}  \nonumber  \leq \kappa_G^2 \E \left\{\norm{\Tilde{\bx}_{t}-\bx_{t}}^2\right\}. \label{rho-hiden}
\end{equation}
Since $\bw_t$ is Gaussian with zero mean and  covariance $\Sigma_t$, it can be shown that 
$ \E \{ \norm{\bw_t}^2\} = \mathrm{Tr} (\Sigma_t)$. Thus, inequality \eqref{ineq-hidden} can be rewritten as 
\begin{equation*}
    \E\left\{\norm{\Tilde{\bx}_{t}-\bx_{t}}^2\right\} \leq  2\kappa_u^2 \mathrm{Tr} ( \Sigma_t ) +2 \lambda^2  \E \left\{ \norm{\Tilde{\bx}_{t-1}-\bx_{t-1}}^2\right\}.
\end{equation*}
By inserting the previous error bounds recursively, one can obtain an explicit upper bound as follows
\begin{align}
    \E\left\{\norm{\Tilde{\bx}_{t}-\bx_{t}}^2\right\} & \leq 
     (2 \lambda^2)^t \E \left\{ \norm{\Tilde{\bx}_{0}-\bx_{0}}^2\right\} \nonumber \\ & + 2\sum_{i=0}^{t-1} (2 \lambda^2)^i \kappa_u^2 \mathrm{Tr} (\Sigma_i). \label{rho-hiden}
\end{align}
This inequality combined with the fact that $\E \left\{ \norm{\Tilde{\bx}_{0}-\bx_{0}}^2\right\}=\mathrm{Tr}(\Gamma)$ leads to 
\begin{align*}
    \rho_t(\theta_F,\theta_G)  \leq \kappa_G^2\left((2 \lambda^2)^{t}\mathrm{Tr}(\Gamma)  + 2 \sum_{i=0}^{t-1} (2 \lambda^2)^{i}  \kappa_u^2 \mathrm{Tr} (\Sigma_i)\right).
\end{align*}
For stable recurrent networks, we have $\lambda < \frac{1}{\sqrt{2}}$. Thus,  $\lambda^t \rightarrow 0 $ as $t \rightarrow \infty$ and the geometric series  converges to
\begin{align*}
    \rho_{\infty}(\theta_F,\theta_G)  \leq 2\frac{(\kappa_u \kappa_G)^2 \mathrm{Tr} (\Sigma)}{1-2 \lambda^2}.
\end{align*}
\end{proof}

Despite our usual expectations, aiming at training stable RNN models leads to imposing unnecessary constraints on the trainable parameters (cf. \cite{miller2018stable}), which may result in poor robustness properties. Moreover, stability is not a requirement when training recurrent neural networks as they usually operate over a finite time horizon. In Section \ref{sec:Learning Robust RNN}, we discuss that training without imposing stability conditions will result in superior robustness properties.

The dynamics of a basic recurrent neural network model is governed by
\begin{align}\label{Basic-RNN}
\begin{split}
        \bx_t &= \sigma(\A \bx_{t-1}+\B \bu_t +b),\\
        \by_t &= \C \bx_{t}+ c,
\end{split}
\end{align}
where $\sigma$ is Lipschitz continuous with constant $\kappa_\sigma$. The trainable parameters are components of matrices $\A, \B, b, \C$, and $c$. Some popular  examples of $\sigma$ are $\tanh$, and Relu function, where in both cases $\kappa_{\sigma} = 1$. The corresponding Lipschitz constants are
\[  \lambda=\kappa_\sigma \norm{\A},~
    \kappa_u =\kappa_\sigma \norm{\B},~
    \kappa_G =\norm{\C},\]
where the matrix norm is defined by 
\begin{equation*}
    \norm{A}=\sup \left\{\norm{Ax}~|~x\in \R^n \SP \text{with} \SP \norm{x}=1 \right\}.
\end{equation*}

\vspace{0.1cm}
\begin{cor}
For the class of basic RNNs, if we assume that the covariance matrix of the input noise stays constant over time and equal to $\Sigma$ and $\kappa_{\sigma}$ is set to $1$ then upper bounds \eqref{UP1}-\eqref{UP2} can be improved by
\begin{align}\label{UP1-BR}
    \rho_t (\theta_F,\theta_G) \leq \norm{\C}^2 \Big(\norm{\A}^{2t} \mathrm{Tr}(\Gamma) \\+ \sum_{i=0}^{t-1} \norm{\A}^{2i} \norm{\B}^2 \mathrm{Tr} (\Sigma) \Big)\nonumber,
\end{align}
and
\begin{equation}\label{UP2-BR}
    \rho_\infty (\theta_F,\theta_G) \leq   \left(\frac{\norm{\B} \norm{\C}}{\sqrt{1-\norm{\A}^2}} \right)^2 \mathrm{Tr} (\Sigma).
\end{equation}
\end{cor}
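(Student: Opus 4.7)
The plan is to revisit the recursive inequality from Theorem~\ref{thm2}, but exploit two structural features of the basic RNN \eqref{Basic-RNN} that were not used there: the Lipschitz constant $\kappa_\sigma$ of $\sigma$ equals $1$, and the activation is composed with an \emph{affine} map of $(\bx_{t-1},\bu_t)$. Together with the independence of $\bw_t$ from the past, these features let us avoid the crude inequality $(x+y)^2 \leq 2(x^2+y^2)$ that was used in Theorem~\ref{thm2}, which is the source of the extra factor of $2$ appearing inside the geometric series.

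First I would derive a one-step deviation bound that keeps $\A(\Tilde{\bx}_{t-1}-\bx_{t-1})$ and $\B\bw_t$ additively inside a single norm, instead of splitting them through the triangle inequality as before. Starting from
\[
\Tilde{\bx}_t - \bx_t = \sigma(\A\Tilde{\bx}_{t-1} + \B\Tilde{\bu}_t + b) - \sigma(\A\bx_{t-1} + \B\bu_t + b),
\]
Lipschitz continuity of $\sigma$ with $\kappa_\sigma = 1$ immediately yields $\norm{\Tilde{\bx}_t - \bx_t} \leq \norm{\A(\Tilde{\bx}_{t-1}-\bx_{t-1}) + \B\bw_t}$. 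Squaring and expanding the inner product, the crucial observation is that upon taking expectation the cross term $2\,\E\langle \A(\Tilde{\bx}_{t-1}-\bx_{t-1}),\,\B\bw_t\rangle$ vanishes, because $\bw_t$ has zero mean and is independent of $\Tilde{\bx}_{t-1}$ (which is determined only by $\bw_0,\ldots,\bw_{t-1}$ and the initial condition). The remaining two terms can be bounded using $\E\norm{\A v}^2 \leq \norm{\A}^2 \E\norm{v}^2$ and $\E\norm{\B\bw_t}^2 = \mathrm{Tr}(\B^T\B\Sigma) \leq \norm{\B}^2 \mathrm{Tr}(\Sigma)$.

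This collapses the recursion to $\E\norm{\Tilde{\bx}_t-\bx_t}^2 \leq \norm{\A}^2 \E\norm{\Tilde{\bx}_{t-1}-\bx_{t-1}}^2 + \norm{\B}^2 \mathrm{Tr}(\Sigma)$, so the factor $2\lambda^2$ from Theorem~\ref{thm2} is replaced by $\norm{\A}^2$. Iterating this from $t=0$, using $\E\norm{\Tilde{\bx}_0-\bx_0}^2 = \mathrm{Tr}(\Gamma)$, produces the geometric sum in \eqref{UP1-BR}; multiplying by $\kappa_G^2 = \norm{\C}^2$ exactly as in the proof of Theorem~\ref{thm2} yields the first bound. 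For \eqref{UP2-BR} I would then let $t \to \infty$ under the implicit stability assumption $\norm{\A} < 1$, so that $\sum_{i=0}^{\infty}\norm{\A}^{2i} = (1-\norm{\A}^2)^{-1}$ and the closed-form expression follows. The only genuinely new ingredient beyond Theorem~\ref{thm2} is the vanishing of the cross term, which is the main step to get right and which relies crucially on (i) the inner argument of $\sigma$ being affine in $(\bx_{t-1},\bu_t)$, so that $\B\bw_t$ can be isolated additively, and (ii) $\kappa_\sigma = 1$, without which a residual $\kappa_\sigma^2$ factor would persist throughout the recursion and into the final bound.
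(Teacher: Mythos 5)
Your proposal is correct and follows essentially the same route as the paper's own proof: keep $\A(\Tilde{\bx}_{t-1}-\bx_{t-1})+\B\bw_t$ inside a single norm via the Lipschitz property of $\sigma$, kill the cross term using the zero mean and temporal independence of $\bw_t$, and iterate the resulting recursion $\E\norm{\delta_t}^2 \leq \norm{\A}^2\,\E\norm{\delta_{t-1}}^2 + \norm{\B}^2\,\mathrm{Tr}(\Sigma)$ exactly as in Theorem~\ref{thm2}. No substantive differences.
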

\begin{proof}
Let us define $h_t = \A\bx_{t-1} +\B\bu_t +b$ for ease of notation, then by Lipschtiz continuity of $\sigma$ 

\begin{align*}
    \norm{\tilde{\bx}_t - \bx_t} &\leq \norm{\sigma(\tilde{h}_t) - \sigma(h_t)}\\
    &\leq \kappa_\sigma \norm{\tilde{h}_t-h_t} \\
    & = \kappa_\sigma \norm{\A(\tilde{\bx}_{t-1}-\bx_{t-1})+B\bw_t}.
\end{align*}
By using the above inequality it is clear that
\begin{align*}
    &\E \left\{ \norm{\delta_t}^2\right\} \leq   \E \left\{\kappa_\sigma^2\norm{\A(\delta_{t-1})+B\bw_t}^2\right\} \\
    &\leq \kappa_{\sigma}^2\E \left\{ \delta_{t-1}^T\A^T \A\delta_{t-1} + \bw_t^T\B^T\B\bw +2 \delta_{t-1}^T\A^T \B \bw_t\ \right\},
\end{align*}
where $\delta_t =\tilde{\bx}_t - \bx_t$. Note that the input noise is independent through time i.e. $\E\left\{\bw_t^T\bw_s\right\} = 0$ if $t\neq s$. Hence the $\tilde{x}_{t-1}$ is independent from $\bw_t$ and   $\E\left\{\bw_t^T\delta_{t-1}\right\} = 0$. Therefore the above inequality boils down to
\begin{align}
     \E \left\{ \norm{\delta_t}^2\right\} &\leq\kappa_{\sigma}^2\E \left\{ \delta_{t-1}^T\A^T \A\delta_{t-1} + \bw_t^T\B^T\B\bw \right\} \nonumber   \\
    & \hspace{-5 mm}  \leq \kappa_\sigma^2 \Big( \norm{\A}^2 \E\left\{ \norm{\delta_{t-1}}^2 \right\} + \norm{\B}^2 \mathrm{Tr}(\Sigma_t)  \Big) \label{linear-app}.
\end{align}
Replacing the inequality \eqref{ineq-hidden} by \eqref{linear-app} with similar approach to proof of Theorem \ref{Thm-Upper-Bound} the inequalities \eqref{UP1-BR}, \eqref{UP2-BR}  follows.

% \textbf{\CR Update the equation make sure they are correct and then updat the text accordingly}

\end{proof}
%%%%%%%%%%%%%%%%%%%%%%%%%%%%%%%%%%%%%%% Here
\vspace{0.1cm}
 
We should point out that the upper bounds obtained in this section are based on Lipschitz constants. Therefore, they are not expected to be tight.

    % \newpage
	\begin{figure}[t]
        \centering
        \includegraphics[width=0.4\textwidth,trim=0 0 0 20,clip]{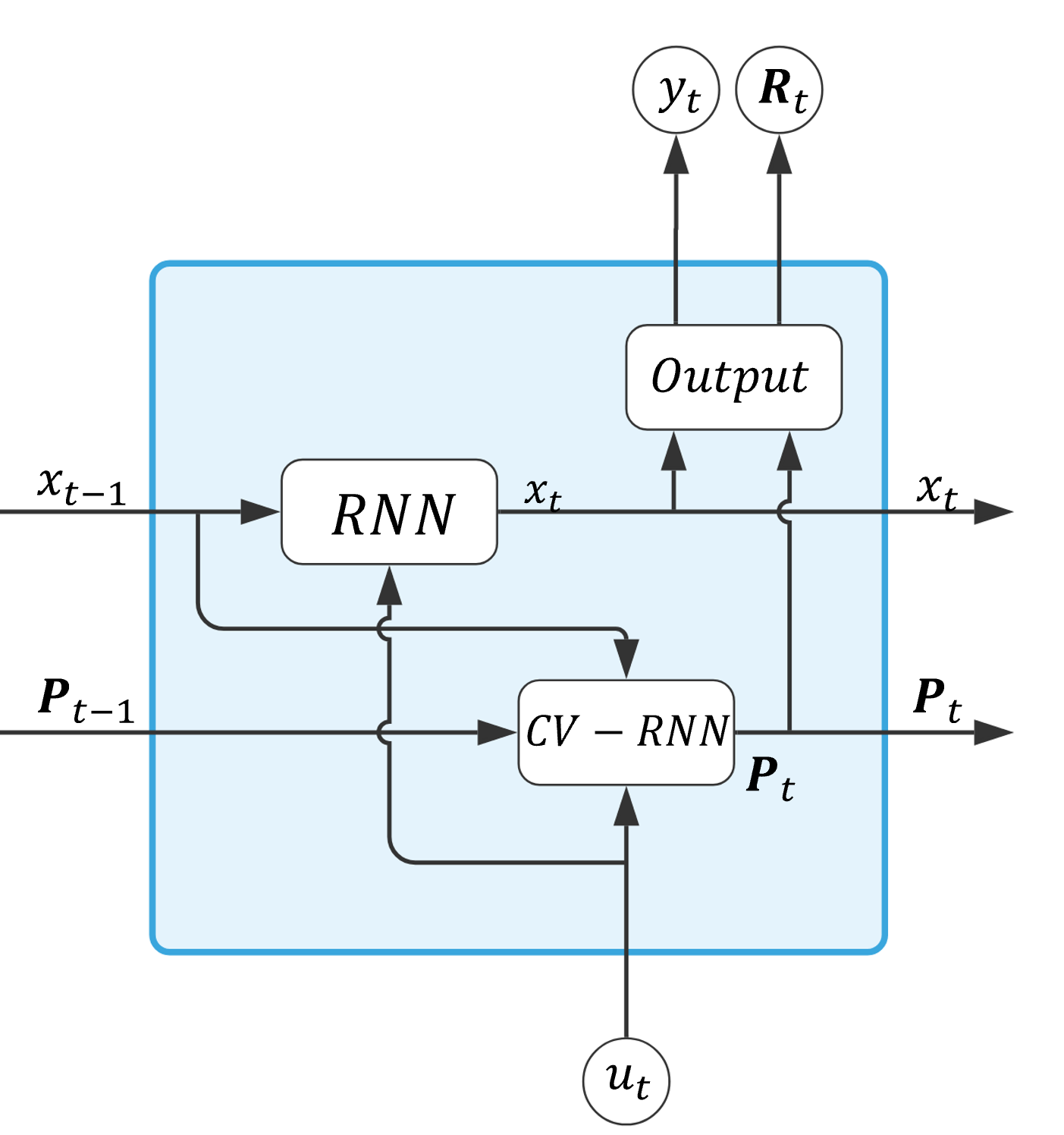}
        \caption{Robust learning architecture.}
        \label{fig:Robustlearner}
    \end{figure}
    
	\section{State and Output Covariance Estimation}
    Computing an exact and explicit expression for the robustness measure requires us to access the exact stochastic information of the disturbed RNN output, which is tremendously challenging, even for the basic RNN models.
    % When  Gaussian noise is filtered by a nonlinear system, the output is not Gaussian anymore. 
    The problem of estimating the statistics of the output of a noisy nonlinear control system has been studied comprehensively in the context of Kalman filtering  \cite{kalman1960contributions}-\cite{julier2004unscented},\cite{smith1962application}. The Extended Kalman Filter (EKF) and its variants are powerful methods that have been successfully applied to various real-world applications  \cite{wan1997neural}. In this section, we build upon ideas from the EKF and apply similar linearization techniques to estimate covariance matrices of the noisy RNN. In the following, we briefly state some of the known results on the transformation of uncertainty \cite{julier1997new}. 

Suppose that a function $f : \R^n \rightarrow \R^m$ in $\mathcal{C}^1$ and a random vector $\bx \in \R^n$  with expected value $\hat{\bx}$ and covariance matrix ${\bf P}_{xx}$ are given. Then, one can expand right-hand-side of \linebreak[4] ${\by}= f(\bx)$ around $\hat{\bx} = \E\left\{\bx\right\}$ to obtain 
    \begin{equation}
        \by = f(\hat{\bx})+ \nabla_x f(\hat{\bx}) (\bx-\hat{\bx}) + {o}\big( (\bx-\hat{\bx})^2 \big).
    \end{equation}
    Taking the expectation from both sides of the equality results in 
    \begin{align}
        \hat{\by} &= f(\hat{\bx})+ \nabla_x f(\hat{\bx}) \E\left\{\bx-\hat{\bx}\right\} + o \big(\E
        \left\{(\bx-\hat{\bx})^2\right\} \big) \nonumber \\
        &= f(\hat{\bx})+\E \left\{ o \big( (\bx-\hat{\bx})^2 \big) \right\} \nonumber\\
        &\approx f(\hat{\bx}),  \label{bias-estimator}
    \end{align}
    where $\E[\by]=\hat{\by}$. Furthermore, the covariance matrix of the random variable $\by$ can be approximated by 
    \begin{equation}\label{Cov-estimator}
        {\bf P}_{yy} \approx \left(\nabla_x f \right) \ {\bf P}_{xx} \left(\nabla_x f\right)^T.
    \end{equation}
The Extended Kalman filter utilizes a similar approach to approximate expected values and covariance matrices. Our main advantage in this problem is that we have full access to the exact value of state and output variables with a predetermined input sequence.

    \begin{figure}[t!]
        \centering
        \includegraphics[width=.5\textwidth,trim=20 0 0 0 ,clip]{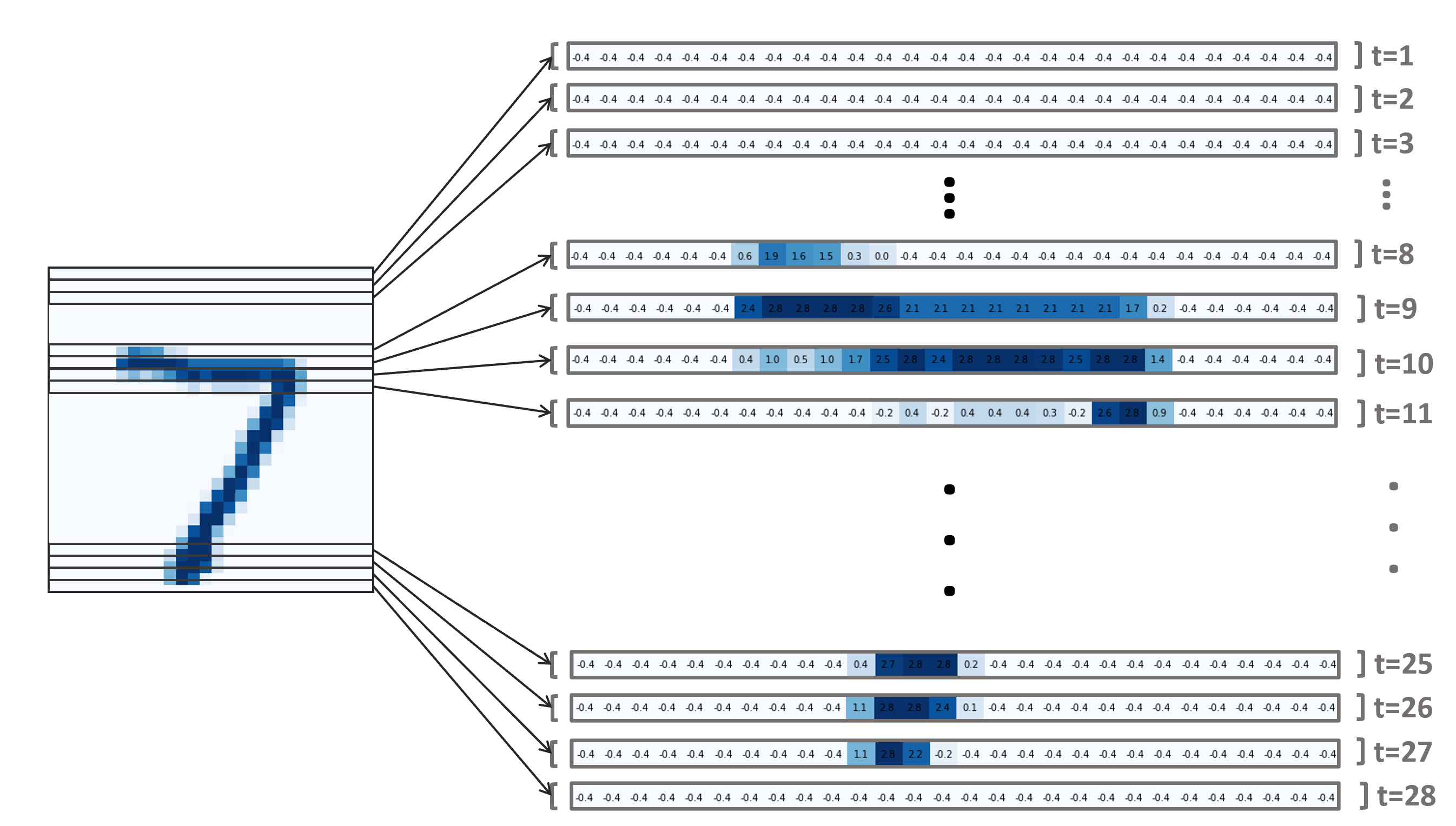}
        % \caption{Sequencing of the MNIST dataset for digit 7}
        \caption{ The sequencing of the MNIST dataset for recurrent models, illustrated with digit $7$.}
        \label{fig:dataprep}
    \end{figure}
    
    \begin{thm}\label{Thm-Trace-R}
    Let us consider the noisy recurrent neural network \eqref{RNN-Noise} with input noise $\bw_t \sim \mathcal{N}(0,\Sigma_t)$. Then, the estimation for the expected value and covariance matrix of the hidden state can be calculated by
    \begin{align}
        \begin{split}\label{RNN_Bias}
             \hat{\bx}_{t} &=  \F(\hat{\bx}_{t-1},{\bu}_t,\theta_F),\\
        \hat{\bf \bP}_t &= (\nabla_x \F)\, \hat{\bf \bP}_{t-1} \, (\nabla_x \F)^T + (\nabla_u \F) \, \Sigma_t \, (\nabla_u \F)^T,
        \end{split}
    \end{align}
    where $\hat{\bx}_t =\E [\Tilde{\bx}_t]$ and $\hat{\bP}_t$ denotes the estimation of the covariance matrix $\bP_t=\E \left\{(\Tilde{\bx}_t - \hat{\bx}_t)(\Tilde{\bx}_t - \hat{\bx}_t)^T \right\}$. Similarly, the expected value and covariance matrix of the output can be estimated as
    \begin{align}
        \begin{split}\label{RNN_outBias}
        \hat{\by}_t &=  
      \G(\hat{\bx}_{t},\theta_G),\\
        \hat{\bf \bR}_t  &=  
      (\nabla_x \G) \hat{\bf \bP}_{t} (\nabla_x \G)^T,
        \end{split}
    \end{align}
    where $ \hat{\bf R}_t$ is the estimation of covariance of the output. The gradients  $\nabla_x \F,\, \nabla_u \F$, and $\nabla_x \G$ are calculated at working points $(\hat{\bx}_{t-1},\,{\bu}_t,\,\theta_F)$ and $(\hat{\bx}_{t-1},\, \theta_G)$ respectively. 
    Furthermore, the robustness measure can be approximated by 
    \begin{equation}\label{perf-rt}
        \rho_{t}(\theta_f,\theta_G)  \approx \mathrm{Tr}\big({\hat{\bf R}}_t\big).
    \end{equation}
    \end{thm}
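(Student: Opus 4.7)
The plan is to derive the recursions in \eqref{RNN_Bias} and \eqref{RNN_outBias} by applying the linearization argument \eqref{bias-estimator}--\eqref{Cov-estimator} separately to $\F$ and $\G$, and then to read off \eqref{perf-rt} from the decomposition \eqref{trace-bias}. Throughout, I would treat $(\hat{\bx}_{t-1},\bu_t)$ as the working point for $\F$ and $\hat{\bx}_t$ as the working point for $\G$, so that the noisy state $\tilde{\bx}_{t-1}$ and the noisy input $\tilde{\bu}_t=\bu_t+\bw_t$ are viewed as the random vectors around which we Taylor-expand.

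First, I would write the first-order expansion
\begin{equation*}
\tilde{\bx}_t \;=\; \F(\hat{\bx}_{t-1},\bu_t,\theta_F) + (\nabla_x \F)(\tilde{\bx}_{t-1}-\hat{\bx}_{t-1}) + (\nabla_u \F)\,\bw_t + o(\|\tilde{\bx}_{t-1}-\hat{\bx}_{t-1}\|^2+\|\bw_t\|^2),
\end{equation*}
and take expectations. Because $\bw_t$ has zero mean and $\E[\tilde{\bx}_{t-1}-\hat{\bx}_{t-1}]=0$ by the inductive hypothesis, the linear terms vanish, leaving $\hat{\bx}_t \approx \F(\hat{\bx}_{t-1},\bu_t,\theta_F)$, which is the first line of \eqref{RNN_Bias}. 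Next, I would subtract the mean from the expansion and form the outer product $(\tilde{\bx}_t-\hat{\bx}_t)(\tilde{\bx}_t-\hat{\bx}_t)^T$. The key structural fact I would invoke is that $\bw_t$ is independent of $\tilde{\bx}_{t-1}$ (since $\tilde{\bx}_{t-1}$ is a function only of $\bw_0,\ldots,\bw_{t-1}$, which are independent of $\bw_t$), so the cross-terms vanish in expectation. The two surviving terms give exactly the Lyapunov-type recursion
\begin{equation*}
\hat{\bP}_t \;=\; (\nabla_x \F)\,\hat{\bP}_{t-1}\,(\nabla_x \F)^T + (\nabla_u \F)\,\Sigma_t\,(\nabla_u \F)^T,
\end{equation*}
which is the second line of \eqref{RNN_Bias}. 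Carrying out the same Taylor expansion for $\tilde{\by}_t=\G(\tilde{\bx}_t,\theta_G)$ around $\hat{\bx}_t$ and applying \eqref{bias-estimator}--\eqref{Cov-estimator} with only one random argument immediately yields $\hat{\by}_t\approx \G(\hat{\bx}_t,\theta_G)$ and $\hat{\bR}_t = (\nabla_x \G)\,\hat{\bP}_t\,(\nabla_x \G)^T$, establishing \eqref{RNN_outBias}.

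Finally, to obtain \eqref{perf-rt}, I would return to the exact decomposition \eqref{trace-bias}, namely $\rho_t = \mathrm{Tr}(\bR_t) + \mathrm{Bias}(\by_t)$, and substitute the linearization-based estimates. Since \eqref{bias-estimator} gives $\hat{\by}_t \approx \G(\hat{\bx}_t,\theta_G)$ and the undisturbed output satisfies $\by_t = \G(\bx_t,\theta_G)$, the bias term is of second order in the perturbation sizes and is neglected under the same approximation order used to derive $\hat{\bR}_t$. Thus to first order in the noise covariances, $\rho_t \approx \mathrm{Tr}(\hat{\bR}_t)$, which is \eqref{perf-rt}.

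The main obstacle I anticipate is the rigorous handling of the independence between $\tilde{\bx}_{t-1}$ and $\bw_t$ in the covariance step: this is what makes the cross terms drop, and it is what distinguishes the EKF-style approximation from a naive worst-case bound. A secondary subtlety is that the gradients $\nabla_x \F,\nabla_u \F,\nabla_x \G$ in the recursion are evaluated at the deterministic working points $(\hat{\bx}_{t-1},\bu_t,\theta_F)$ and $(\hat{\bx}_t,\theta_G)$ rather than at the random arguments, which is itself an approximation justified only up to the $o(\cdot)$ remainder; I would make this approximation status explicit rather than claim equality, consistent with the "$\approx$" used in the statement.
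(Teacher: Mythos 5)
Your proposal follows essentially the same route as the paper's proof: a first-order Taylor expansion of $\F$ and $\G$ about the estimated means, the transformation-of-uncertainty formulas \eqref{bias-estimator}--\eqref{Cov-estimator} to propagate mean and covariance, the temporal independence of $\bw_t$ from $\tilde{\bx}_{t-1}$ to kill the cross terms, and neglect of the bias term in \eqref{trace-bias} to arrive at \eqref{perf-rt}. Your justification for dropping the bias (it is higher order under the same linearization, since the mean recursion reproduces the noiseless trajectory) is if anything slightly more careful than the paper's, which simply asserts $\hat{\by}_t \approx \by_t$ for clean data.
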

    \vspace{0.1cm}
    \begin{proof}
    Based on equation \eqref{RNN_dyn}, the linearization around $ \hat{\bx}_{t-1}$ for the dynamics of the hidden state implies that
    \begin{align*}
        \Tilde{\bx}_t &= \F(\Tilde{\bx}_{t-1},\Tilde{\bu}_t)\\
        & = \F(\hat{\bx}_{t-1} + \delta\bx_{t-1} ,\hat{\bu}_t + \bw_t)\\
        & = \F(\hat{\bx}_{t-1} ,\hat{\bu}_t) + \nabla_x \F \delta\bx_{t-1} + \nabla_u \F \bw_t \\
        & \hspace{3cm} + o(\delta \, \bx_{t-1}^2 ,\bw_t^2).
    \end{align*}
    
    Similar to transformation of uncertainty technique, one can employ equations \eqref{Cov-estimator} and \eqref{bias-estimator} to  estimate the expected value and covariance of $\Tilde{\bx}_t$ by equation \eqref{RNN_Bias}. Exogenous noise
     $\bw_t$ has  Gaussian distribution $\mathcal{N}(0,\Sigma_t)$ with assumption that the additive noise is independent throughout time \linebreak[4] i.e. $\E[\bw_t \bw_s]=\delta_{t-s}$.  Likewise, the linearization  for the output is going to be
    \begin{align*}
         \Tilde{\by}_t &= \G(\Tilde{\bx}_{t}) = \G(\hat{\bx}_{t} +  \delta\bx_{t} )\\
         &=\G(\hat{\bx}_{t}) + \nabla_x \G  \delta\bx_{t}  + {o}( \delta\bx_{t}^2). \
    \end{align*}
    
    Employing transformation of uncertainty, it is straightforward to compute the output expected value and covariance by equations \eqref{RNN_outBias}.

    In each training epoch, our access to the $\hat{\by_t}$ is limited to the sampling of $\by_t$. If the original data is clean, i.e., there are no disturbance or corruption applied to the input sequence, then $\hat{\by}_t=\by_t$. However, in most cases, the original data is not clean, and there is some error. There are various methods to improve this approximation that is out of the scope of this paper \cite{julier1997new},\cite{julier2004unscented}. Therefore, we assume that the bias term in the robustness measure is zero, i.e., $\hat{\by}_t \approx \by_t$.
    Finally, the robustness measure can be approximated by 
    \begin{align*} 
        \rho_{t}(\theta_F,\theta_G)  = \mathrm{Tr}\big({\bf R}_t\big) \approx  \mathrm{Tr}\big(\hat{\bf R}_t\big).
    \end{align*}
    \end{proof}
 
 The iterative update rules \eqref{RNN_Bias} and \eqref{RNN_outBias} along the evolution of the RNN provide reliable estimates for the statistics of the desired variables which in turn enable us to estimate the robustness measure in an efficient manner. Although this approach is based on approximating a nonlinear system with its linearized counterpart, it turns out that it provides reliable and efficient estimates for our learning purposes.   
 
%    Employing this approach provide an alternate to avoid the challenging problem of determining the output statistics by using the update rules equations \eqref{RNN_Bias}, and \eqref{RNN_outBias} along the evolution of the recurrent model to achieve a rational estimate for the model's robustness measure.

    \begin{exmp}
    Let us consider the basic RNN whose dynamics is governed by \eqref{Basic-RNN}. 
    For simplicity of our notation, we use $h_t = \A\bx_{t-1} +\B\bu_t +b$. The gradient of the recurrent model with respect to various variables are given by
    \begin{align*}
        \nabla_x \F(h_t) &=\sigma' (h_t) \odot \A, \\
        \nabla_u \F(h_t) &= \sigma' (h_t)\odot \B,\\
        \nabla_x \G(x_t) &= \C, 
    \end{align*}
    where $\odot$ represents the Hadamard product and $\sigma'(h_t)$ stands for  the element-wise derivative of the $\sigma$ function at $h_t$.  The update rules for matrices $\hat{\bf P}_t$ and $\hat{\bf R}_t$ are
    \begin{align}
    \begin{split}\label{Basiv-var1}
        \hat{\bP}_t = \big(\sigma' (h_t) \odot \A \big) \bP_{t-1} \big(\sigma' (h_t) \odot \A \big)^T + \\
        \big(\sigma' (h_t) \odot \B \big) \Sigma_t \big(\sigma' (h_t) \odot \B \big)^T, 
    \end{split}
    \end{align}
    \begin{align}\label{Basiv-var2}
        \hat{\bR}_t \approx \C \bP_t \C^T.
    \end{align}
    \end{exmp}
  \vspace{0.1cm}
  
 The iterative equations \eqref{Basiv-var1}-\eqref{Basiv-var2} reveal how the trainable parameters influence the RNN's outcomes when they are subject to exogenous noise. We should highlight that the analysis in this section shows how we can efficiently approximate the robustness measure and employ it for training a robust recurrent neural network. 
    
    % \begin{figure}[t]
    %     \centering
    %     \subfloat[71][$\omega= 0$]{
    %     \includegraphics[width=0.24\textwidth]{Fig/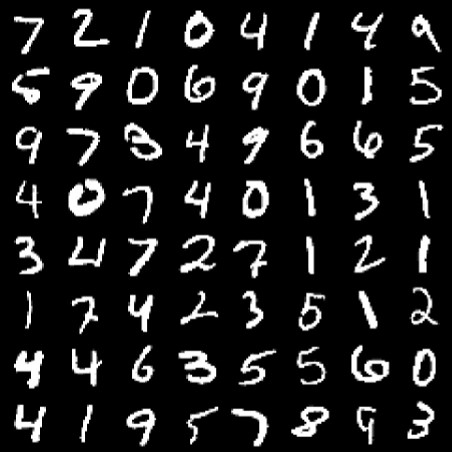}
    %     \label{fig:subfig2}}
    %     \subfloat[72][$\omega= 1$]{
    %     \includegraphics[width=0.24\textwidth]{Fig/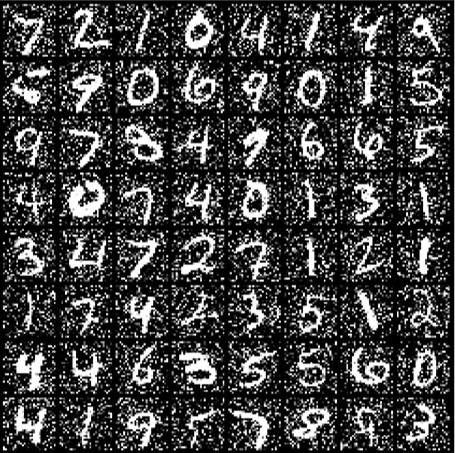}
    %     \label{fig:subfig3}}
        
    %     \subfloat[73][$\omega= 2$]{
    %     \includegraphics[width=0.24\textwidth]{Fig/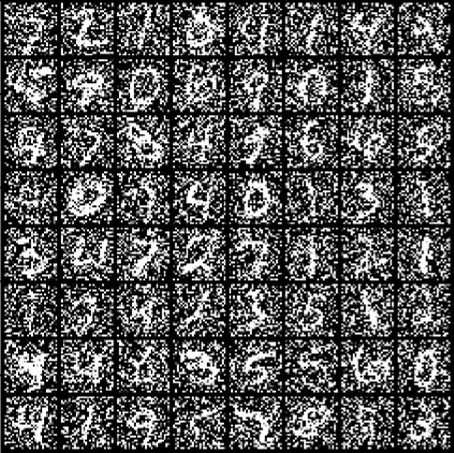}
    %     \label{fig:subfig4}}
    %     \subfloat[74][$\omega= 3$]{
    %     \includegraphics[width=0.24\textwidth]{Fig/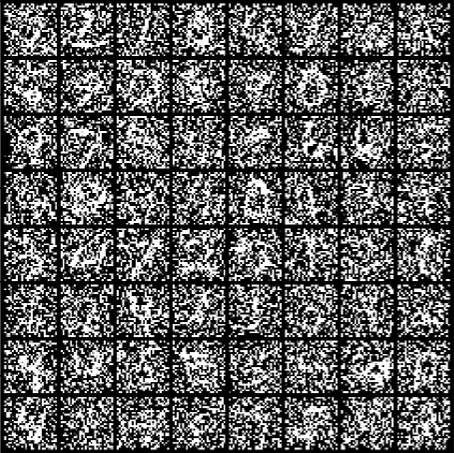}
    %     \label{fig:subfig4}}
    %     \caption{Distorted MNIST dataset with different noise amplitudes.}
    %      \label{fig:Noisy data}
    % \end{figure}
    
    \begin{figure}[t]


    \begin{subfigure}[t]{.49\linewidth}
        \centering
    	\includegraphics[width=\linewidth]{N0c.jpg}
    	\caption{$\omega= 0$.}
    	\label{fig:risk_collide_50}
    \end{subfigure}
    \hfill
    \begin{subfigure}[t]{.49\linewidth}
        \centering
    	\includegraphics[width=\linewidth]{N1c.jpg}
    	\caption{$\omega= 1$.}
    	\label{fig:risk_collide_path_10}
    \end{subfigure}
    \medskip
    \begin{subfigure}[t]{.49\linewidth}
        \centering
    	\includegraphics[width=\linewidth]{N2c.jpg}
    	\caption{$\omega= 2$.}
    \end{subfigure}
    \hfill
    \begin{subfigure}[t]{.49\linewidth}
        \centering
    	\includegraphics[width=\linewidth]{N3c.jpg}
    	\caption{$\omega= 3$.}
    \end{subfigure}
    \caption{Distorted MNIST dataset with different noise amplitudes.}
    \label{fig:Noisy data}
\end{figure}

    \section{ Learning Robust RNN} \label{sec:Learning Robust RNN}

    In previous sections, we quantified the effect of noise input on the output of a RNN network. As we discussed in Section III, the expected value of the training cost in presence of noise is bounded by    
     \begin{align*}
        \E\left\{ \mathcal{L}(\Tilde{\by}_{t}) \right\} \leq
         \kappa_{\mathcal{L}}\sqrt{ \rho_t(\theta_F,\theta_G)} +\mathcal{L}({\by}_{t}).
    \end{align*}
    This inequality reveals that including the robustness measure $\rho_t(\theta_F,\theta_G)$ as a  regularization term to the loss function, i.e.,
    \begin{equation}\label{reg-loss}
       \hat{\mathcal{E}}_t = \mathcal{L}(\by_t) + \mu \rho_t(\theta_F,\theta_G),
    \end{equation}
    where $\mu > 0$ is a design parameter to adjust robustness level, will significantly improve learning robustness. In view of \eqref{reg-loss} and the fact that calculating an explicit closed-form expression for the robustness measure is very challenging, we consider two remedies. The first approach is based on the upper bound in the right hand-side of inequality \eqref{UP1}, which is obtained using Lipschitz properties of the RNN. The second method is based on approximation formula \eqref{perf-rt}, which is derived by calculating propagation of noise in the output of the RNN throughout time.

    %Using our results from previous sections, we develop two different algorithms to improve robustness of RNN models.  

    % To ensure that the regulatory term, which is defined using the upper bound,  remains small, one strategy is to impose the stability constraint on the RNN. 

    \subsection{Regularization via Robustness Measure Approximation}\label{Subsec-VI-1}
    
    The existing methods \cite{pascanu2013difficulty} in the context of recurrent neural networks only consider the bias term in \eqref{trace-bias} during the training and ignore the convariance term. We utilize an estimation of the output covariance in order to approximate  the robustness measure and regularize the loss function according to 
    \begin{equation}\label{reg-rbst-msr}
        \hat{\mathcal{E}}=\sum_{t=0}^T \bigg(\,\mathcal{L}(\by_t) + \mu \mathrm{Tr}(\hat{ \bR}_t) \,\bigg).
    \end{equation}
    
    The equations  \eqref{RNN_Bias}-\eqref{RNN_outBias} present a procedure to approximate the covariance matrices of the hidden states and the output variables. A schematic diagram of this process is shown in Figure \ref{fig:Robustlearner}, where the RNN block updates the hidden state biases and the CV-RNN block updates the covariance matrix of the hidden state.

    %to modify the structure of the recurrent model at each evolution incident such as figure \eqref{fig:Robustlearner} and update the covariance matrices iteratively. 
    %In figure \eqref{fig:Robustlearner}  The updated loss function is then defined by
    
    One may employ a variety of  backpropagation through time algorithms \cite{martens2011learning}-\cite{bengio2013advances} to compute the respective gradients for both the loss function and the surrogate for the robustness measure to learn a robust RNN.
    
    % Since the output at each time instance  depends on all previous hidden states, one  needs to perform the back-propagation algorithm not only through states but also through time to compute the gradients .

    %{\CR In each step one need to update both $\bx_t \in \R^{n}$, $\bP_t \in \R^{n\times n}$  updated and then backpropagated through time.}
    % therefore if the complexity of learning the original recurrent model was $\mathcal{O}(\mathcal{F}(N))$, the proposed model has  complexity  $\mathcal{O}\big(\mathcal{F}(N^2)\big)$.

    \subsection{Regularization via Upper Bound for  Robustness Measure}\label{Subsec-VI-2} 
   
   %Therefore we are going to introduce a secondary method to obtain some degree of robustness with lower complexity.
   
   In this approach, we aim to minimize 
  the upper bound obtained in Theorem \ref{thm2} for the robustness measure.  As a result, the regularized loss function for training\footnote{{Further discussions on the decay rate of  factor $\frac{\mu}{N_e}$ can be found in \cite{pascanu2013difficulty}.}} can be expressed as  to the cost function such that
    \begin{equation}\label{Loss-Omega}
       \hat{\mathcal{E}}=\sum_{t=0}^T \bigg(\, \mathcal{L}(\by_t) +  \frac{\mu}{{N_e}}\Omega_t \,\bigg),
    \end{equation}
    where $N_e$ measures the number of epochs, $\mu$ is a design parameter, and $\Omega_t$ is the upper bound in inequality \eqref{UP1} that is given by
    \begin{equation*}
    \Omega_t =  \kappa_G^2\left((2\lambda^2)^t\mathrm{Tr}(\Gamma)  + \kappa_u \sum_{i=0}^{t-1} (2\lambda^2)^{i}   \mathrm{Tr} (\Sigma_i)\right).
    \end{equation*}
    For a basic recurrent model \eqref{Basic-RNN}, the upper bound in   inequality \eqref{UP1-BR} can be used
    \begin{equation*}
    \Omega_t = \norm{\C}^2 \left(\norm{\A}^{2t} \mathrm{Tr}(\Gamma) + \\ \sum_{i=0}^{t-1} \norm{\A}^{2i} \norm{\B}^2 \mathrm{Tr} (\Sigma_i) \right).
    \end{equation*}

    Imposing the upper bound as a regularizer steers the training process to learn a more robust system with respect to the input noise. Under some conditions, the regularizer indirectly enforces a  basic RNN to be stable, i.e., the $\lambda < 1$. In fact, for basic RNN, if after training one can  verify that 
    \[\Omega_t < \norm{\C}^2\left( \mathrm{Tr} (\Gamma)+t\norm{\B}^2 \mathrm{Tr} (\Sigma)\right), \]
    then, the resulting RNN will be  asymptotically stable.

    \begin{rem} The regularized problem in Subsection \ref{Subsec-VI-1} results in a comparably more robust RNN model with respect to what the regularized problem in Subsection \ref{Subsec-VI-2} can provide. However, the  computational cost of computing covariance matrices in Subsection \ref{Subsec-VI-1} is significantly higher. \end{rem}
        
    \begin{figure}[t]
        \centering
        \includegraphics[width=0.41\textwidth]{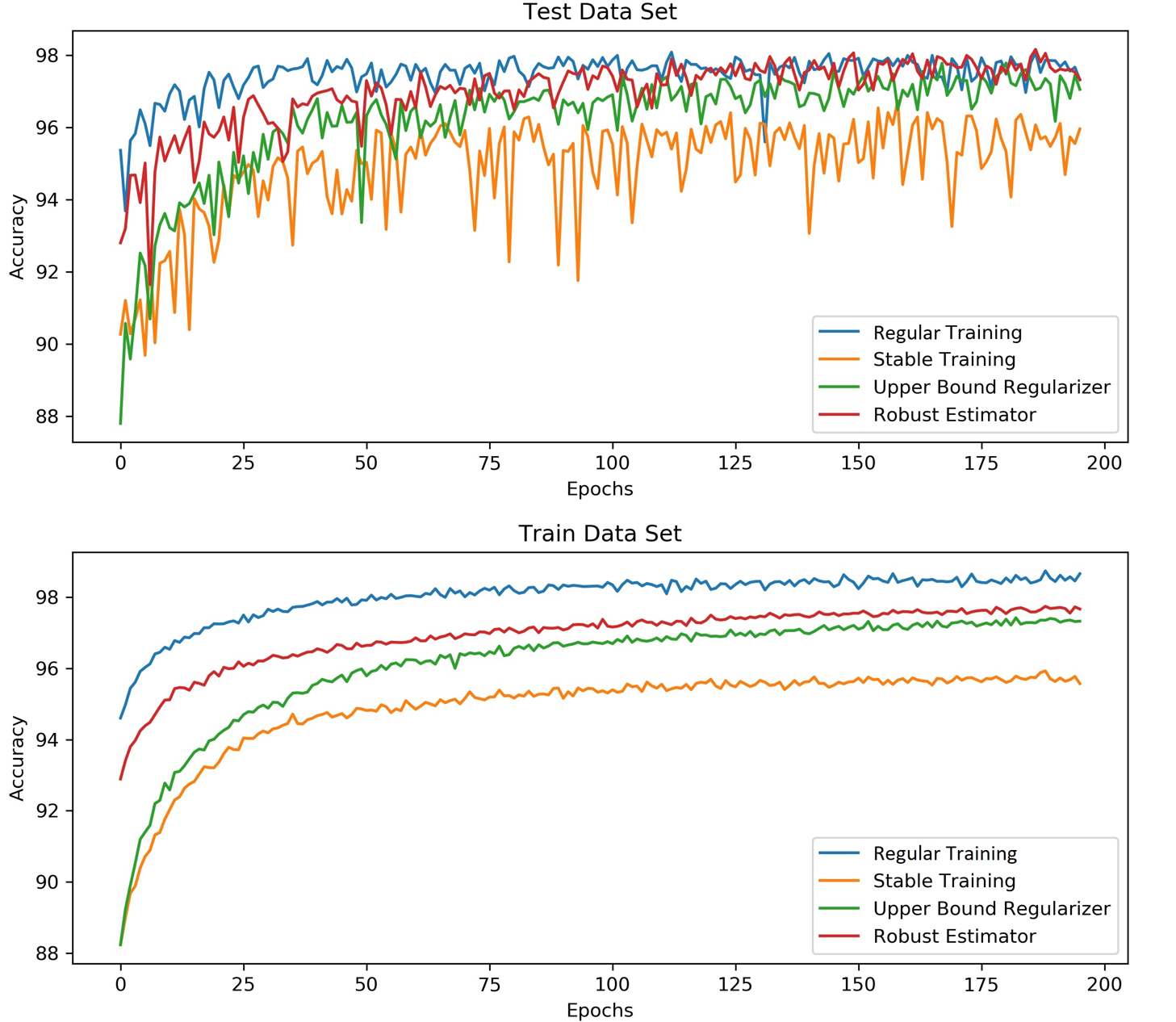}
        \caption{Learning curves for both training and testing datasets{, with aforementioned learning methods.}}
        \label{fig:Learning curve}
    \end{figure}

\begin{rem}
 The idea of training for a stable RNN may not necessarily improve robustness of RNN networks. One can observe from our derivations that the robustness measure depends on both weight matrices $\A$ and $\B$, whereas stability requirement only depends on $\A$. To enforce stability condition, the singular values of the weight matrix $\A$ are trimmed down to $1$ after each iteration  \cite{miller2018stable}. Enforcing stability conditions can potentially result in gradient vanishing problem, which will in turn deteriorate the learning accuracy. 
 %{\CB Let's edit this by saying more accurately what we mean. Through extensive simulations, it has been observed that ... also mention gradiant vanishing}

\end{rem}

% {\CR By adding the upper bound of \eqref{UP1} for basic RNN models as a regularizer to the loss function, one should expect to learn a model whose $\lambda$ is close to $1$. The reason is that for $\lambda > 1$, terms involving $\lambda^i$  will grow exponentially as $i$ increases. As a result using this approach, the trainable parameters $\theta_F$ have a more dominant effect on robustness than $\theta_G$. }

\section{Experiments}

To validate our theoretical results presented in the previous  sections, we  consider the benchmark classification problem of handwritten digits using MNIST dataset \cite{lecun1998gradient}. Each sample in this dataset is a $28 \times 28 $ pixels image of a handwritten digit, which  can be represented by a $28 \times 28$ matrix, and is labeled by a target  digit between $0$ and $9$.  As it is shown in Figure \ref{fig:dataprep}, each sample is turned into a sequence of $28$ row vectors in order to make the data suitable to be utilized as input for recurrent neural networks. %Reshaping the data into a time sequence allows us to employ a variation of RNN to learn the objective class.

All simulations are performed in Pytorch platform\footnote{All codes for the experiments are available on first author's GitHub at https://github.com/ara416/Robust-RNN-Learning}. For the MNIST classification problem using a basic recurrent model, we investigate and compare four scenarios: regular  learning, stable learning, regularized robust learning using estimates, and regularized robust learning using an upper bound. 

All hyper-parameters that are common in between these models are set to be the same. For these experiments, we choose the Relu as the  activation function with $60$ hidden states and a linear output layer. It can be shown that the Lipschitz constant of Relu is $1$. Because the main objective is to classify the handwritten digits, we employ the cross-entropy loss function 
\[      \mathcal{L}(\,\by_t,\by_t^*\,) = -\by_t[{\bf i}^*]+\log \left(\sum_{j=1}^{m} \exp\left({\by_t\,[j]\,}\right) \right),\]
where ${\bf i}^* = \arg\max_{j} \SP \by_t^*\,[j]$ is a class label.  For stable training, we use the same approach suggested by \cite{miller2018stable}.  Hence, by performing a singular value decomposition for   $\A$ and clipping the singular values down to $1$, we can obtain a guaranteed stable model.

\begin{figure}[t]
    \centering
    \includegraphics[width=0.45\textwidth,trim= 55 30 70 60,clip]{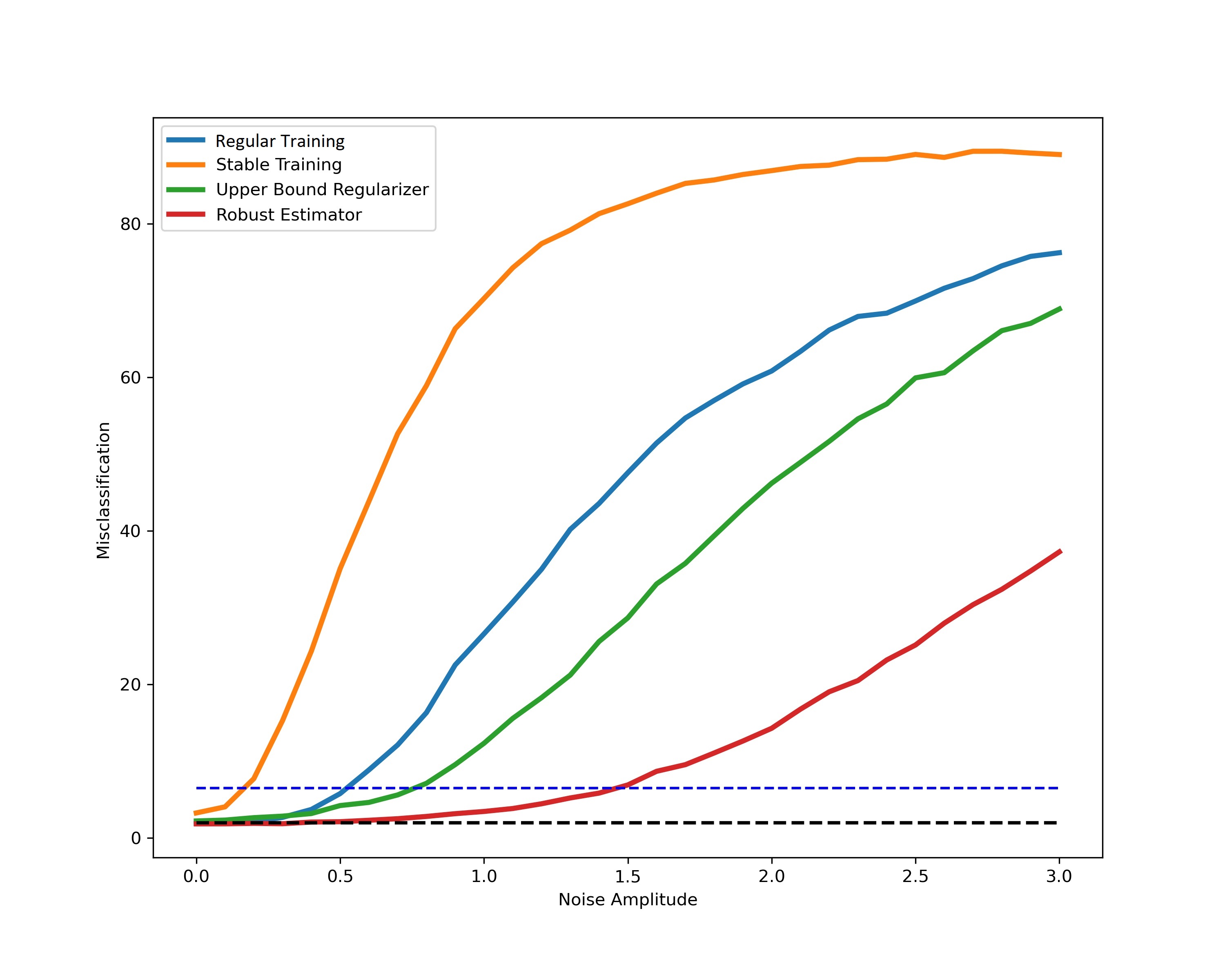}
    \caption{Misclassification percentage concerning the noise amplitude for different learning methods. }
    \label{fig:Rbst-plot}
\end{figure}

% {\CB Regularization via upper bound requires the computation of  $\norm{\A}$,$\norm{\B}$,$\norm{\C}$ for calculation of the value of  $\Omega_t$. To compute the estimation of the robustness measure, we need to modify the structure of the recurrent model according to Figure \ref{fig:Robustlearner} and then add the regularization term based on \eqref{reg-rbst-msr}. }

The learning accuracy curves for training and testing datasets for $200$ epochs are depicted in Figure \ref{fig:Learning curve}. Although the accuracy of the regular learning outperforms the other learning methods during the training phase,  the proposed robust learning methods achieve nearly the same level of classification precision, but they converge at a slower rate than the regular learning on the test dataset. Based on our simulations, the accuracy of classification by a basic RNN is at most $98.03 \%$ with regular learning for test dataset in the absence of noise.  As discussed earlier, imposing stability conditions during training will create gradient vanishing problems. Thus, as it is seen in Figure \ref{fig:Learning curve}, the final learning accuracy is comparably smaller that the other methods.

To analyze robustness properties of our proposed methods, each input sequence is distorted by a Gaussian noise with zero mean and constant covariance matrix $\omega I$. Figure \ref{fig:Noisy data} shows four different batches of $64$ samples for various noise intensities. One can observed that when $\omega > 2$, recognizing the true digits without help of a machine becomes challenging. %%%%%%%%%%%%%%%%%%%%%%%%%%%%%%%%%%%%%%%%%% We are here 
Figure \ref{fig:Rbst-plot} illustrates the misclassification {percentage} with respect to the noise amplitude for the four different scenarios. The black dotted  line indicates the best achievable performance by a one-layer basic RNN  for the current choice of parameters. It is clear from Figure \ref{fig:Rbst-plot} that regularization via robustness estimator outperforms other methods significantly. One can also observe that the model learned by the robust architecture provides an almost constant precision for noise amplitude $\omega < 1$.

Table \ref{table:nonlin} shows noise amplitudes required to cause a certain level of misclassification for each method. Both proposed algorithms enhance robustness significantly. Regularization via robustness estimation reduces the missclassification in presence of exogenous noise least three times.  Learning using the upper-bound regularizer improves the  overall robustness by $50\%$ in comparison to the regular learning method.

Based on the discussion in Section \ref{Subsec-VI-2}  and the value of  $\norm{\A}$ displayed in the last row of Table \ref{table:nonlin}, regularization via upper-bound not only improves the overall robustness, but also steers the trainable parameters to be as close as possible to the stable region.  Our observations reveal that objective function \eqref{Loss-Omega} indirectly encourages stability without sacrificing learning accuracy. Finally, we highlight that none of the proposed robust learning methods guarantee stability, which suggest that  robust learning may not necessarily operate in stable regions.

\section{Conclusion}

We propose a formal analysis to study robustness properties of recurrent neural networks in presence of exogenous noise.  
Obtaining dynamic equations for noise propagation throughout time and quantifying the effect of noise on the output of recurrent networks have allowed us to introduce and utilize a proper measure of robustness to achieve robust learning. We have discussed and compared four different methods and shown that one can achieve significant robustness by regularizing the learning loss function with proper robustness measures. 
 
%The estimation approach introduced in this paper provides an alternate method to bypass the challenging problem of finding the output covariance matrix for computation of the robustness measure.

%Finally, By employing the tools developed in the earlier sections, we  introduce two distinct algorithms to enhance robustness of the RNN models significantly.  {\CB Due to page constraints, we only consider the outcome for the basic RNN in this paper; nevertheless, the robustness methods developed in this article could be extended to any RNN architecture.}

%For future direction, we aim to explore the connections between the robustness and the parameters of RNN models to achieve an even higher degree of robustness for a variety of the RNN architectures. 

\begin{table}[t]
    \centering % used for centering table
    \begin{tabular}{c |c c c c} % centered columns (4 columns)
        \toprule
        Percentage of & Stable & Regular & Upper Bound & Robustness \\ [0.5ex] % inserts table
        misclassification & Training & Training & Regularizer & Estimator \\ [0.5ex] % inserts table
        % \hline % inserts single horizontal line
        % \\
        \midrule
        % $5\%$ & 0 & 0 & 0 & 0 \\
        $3\,\%$ & 0.09 & 0.21 & 0.32 & 0.81 \\ % inserting body of the table
        $5\, \%$ & 0.18 & 0.54 & 0.72 & 1.47 \\ % inserting body of the table
        ${10}\, \%$ & 0.25 & 0.62 & 0.94 & 1.83 \\ % inserting body of the table
        \midrule
        % \\
        % \hline\hline
        % \\
        $\norm{\A}$ & 0.961 & 3.959 & 1.033 & 2.959 \\ % inserting body of the table
        \bottomrule
    \end{tabular}
    \caption{The first three rows show that the required noise amplitude to incur a certain level of misclassification and the last row gives the norm of the resulting $\A$ for each method.} % title of Table($\omega_{\mathcal{M}}$)
    \label{table:nonlin} % is used to refer this table in the text
\end{table}
% \vspace{2mm}

% \bibliographystyle{unsrt}
\bibliography{Ref}

\end{document}